\documentclass{article}
\PassOptionsToPackage{numbers, compress}{natbib}


\usepackage[final]{neurips_2022}


\usepackage[utf8]{inputenc} 
\usepackage[T1]{fontenc}    
\usepackage{hyperref}       
\usepackage{url}            
\usepackage{booktabs}       
\usepackage{amsfonts}       
\usepackage{nicefrac}       
\usepackage{microtype}      
\usepackage{xcolor}         
\usepackage{natbib}
\usepackage{multirow}
\usepackage{amssymb}
\usepackage{pifont}

\usepackage{microtype}
\usepackage{graphicx}
\usepackage{subfigure}
\usepackage{bbm}
\usepackage{booktabs} 
\usepackage{amsmath,amsfonts,amsthm,amssymb}
\usepackage{mathrsfs}

\newtheorem{proposition}{Proposition}

\usepackage{setspace}
\usepackage{graphicx}
\usepackage{subfigure}
\usepackage{hyperref}
%




\usepackage{xcolor}
\newcount\comments  
\comments=1  
\newcommand{\genComment}[2]{\ifnum\comments=1{\textcolor{#1}{\textsf{\footnotesize #2}}}\fi}

\usepackage{wrapfig}
\usepackage{setspace}
\usepackage{graphicx}
\usepackage{subfigure}
\usepackage{booktabs} 

\usepackage{hyperref}
\usepackage{algorithmic}
\usepackage{algorithm}

\usepackage{microtype}
\usepackage{graphicx}
\usepackage{subfigure}
\usepackage{booktabs} 
\usepackage{setspace}
\usepackage{graphicx}
\usepackage{subfigure}
\usepackage{booktabs} 
\usepackage{microtype}
\usepackage{graphicx}
\usepackage{subfigure}
\usepackage{bm}
\usepackage{amsmath,amsfonts,amsthm,amssymb}
\usepackage{mathrsfs}

\usepackage{hyperref}

\title{Toward Causal-Aware RL: \\ State-Wise Action-Refined Temporal Difference}

\author{%
  Hao Sun \\ \texttt{sunhopht@gmail.com} \And Taiyi Wang \\ \texttt{Taiyi.Wang@cl.cam.ac.uk} \\
}
\begin{document}

\maketitle

\begin{abstract}
Although it is well known that exploration plays a key role in Reinforcement Learning (RL), prevailing exploration strategies for continuous control tasks in RL are mainly based on naive isotropic Gaussian noise regardless of the causality relationship between action space and the task and consider all dimensions of actions equally important. In this work, we propose to conduct interventions on the primal action space to discover the causal relationship between the action space and the task reward. We propose the method of State-Wise Action Refined (SWAR), which addresses the issue of action space redundancy and promote causality discovery in RL. We formulate causality discovery in RL tasks as a state-dependent action space selection problem and propose two practical algorithms as solutions. The first approach, TD-SWAR, detects task-related actions during temporal difference learning, while the second approach, Dyn-SWAR, reveals important actions through dynamic model prediction. Empirically, both methods provide approaches to understand the decisions made by RL agents and improve learning efficiency in action-redundant tasks. 


\end{abstract}
\section{Introduction}
\label{intro}
Although model-free RL has achieved great success in various challenging tasks and outperforms experts in most cases~\cite{mnih2015human,silver2016mastering,lillicrap2015continuous,vinyals2019grandmaster,berner2019dota}, the design of action space always requires elaboration. For example, in the game StarCraftII, hundreds of units can be selected and controlled to perform various actions. To tackle the difficulty in exploration caused by the extremely large action and state space, hierarchical action space design and imitation learning are used~\cite{sun2018tstarbots,vinyals2019grandmaster} to reduce the exploration space. Both of those approaches require expert knowledge of the task. On the other hand, even in the context of imitation learning where expert data is assumed to be accessible, causal confusion will still hinder the performance of an agent~\cite{de2019causal}. Those defects motivate us to explore the causality-awareness of an agent that permits an agent to discover the causal relationship for the environment and select useful dimensions of action space during policy learning in pursuance of improved learning efficiency.
Another motivating example is the in-hand manipulation tasks~\cite{andrychowicz2020learning}: robotics equipped with touch sensors outperforms the policies learned without sensors by a clear margin in hand-in manipulation tasks~\cite{melniktactile}, showing the importance of causality discovery between actions and feedbacks in RL. 
A similar example can be found in human learning: knowing nothing about how to control the finger joints flexibly may not hinder a baby learns to walk, and a baby has not learned how to walk can still learn to use forks and spoons skillfully, inspiring us to believe that the challenge for exploration can be greatly eased after the causality between action space and the given task is learned.

In this work, the recent advance of instance-wise feature selection technique~\cite{yoon2018invase} is improved to be more suitable in large-scale state-wise action selection tasks and adapted to the time-series causal discovery setting to
select state-conditioned action space in RL with redundant action space. With the proposed method, the agent learns to perform intervention, discover the true structural causal model (SCM) and select task-related actions for a given task, remarkably reduces the burden of exploration and obtains on-par learning efficiency as well as asymptotic performance compared with agents trained in the oracle settings where the action spaces are pruned according to given tasks manually.


\section{Preliminary}

\begin{figure*}[t]
\centering
\begin{minipage}[b]{1.\linewidth}
\includegraphics[width=1.0\linewidth]{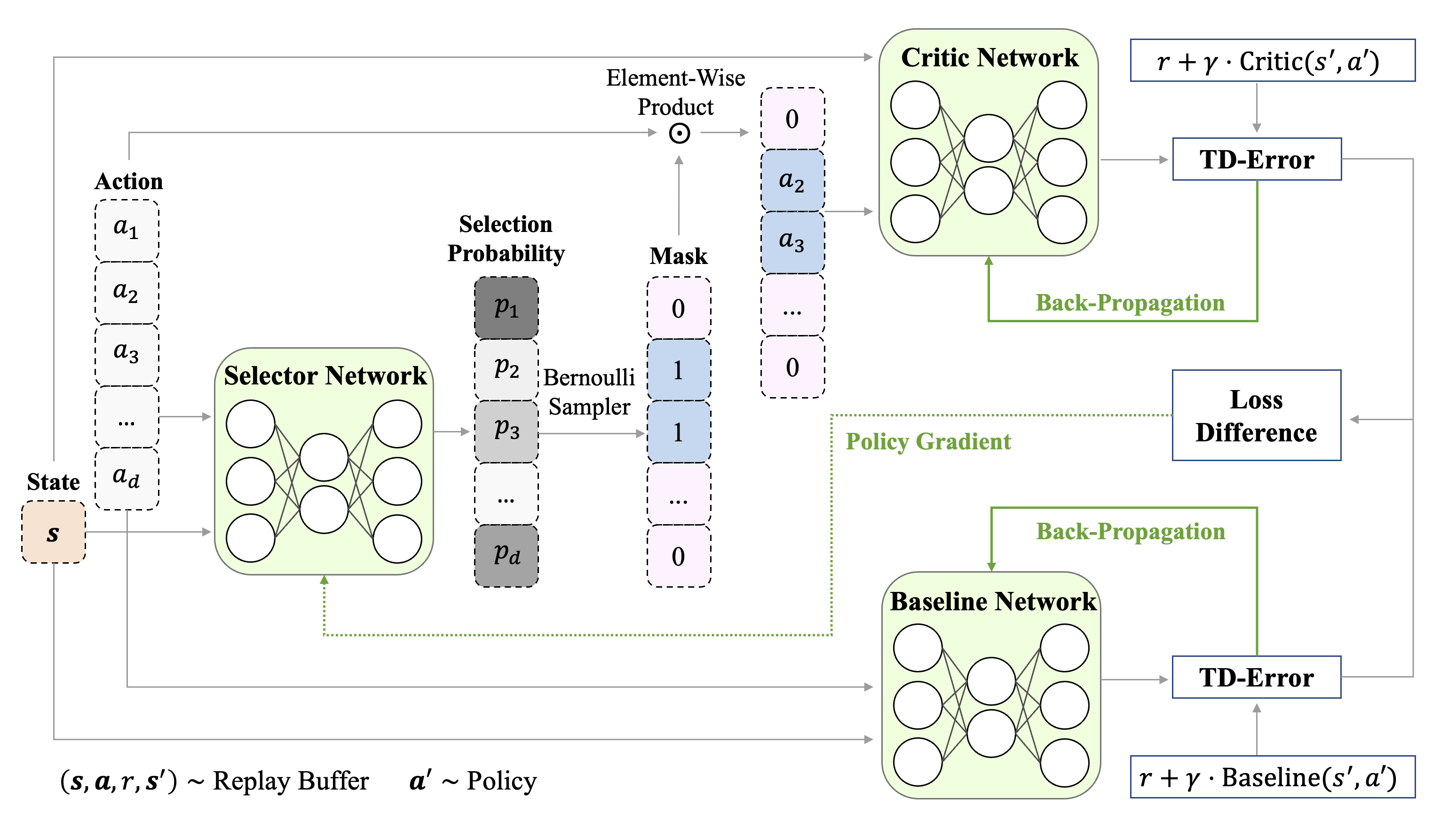}
\end{minipage}%
\caption{\textbf{Block diagram of INVASE in temporal difference learning}. States and actions sampled from replay buffer are fed into the selector network that predicts the selection probabilities of different dimensions of actions. A selection mask is then generated according to such a selection probability vector. The critic network and the baseline network are trained to minimize temporal difference error with states and the selected dimension of actions and primal action respectively. The difference of TD-Error is used to conduct a policy gradient to update the selector network.}
\label{teaser}
\end{figure*}
\paragraph{Markov Decision Processes}
RL tasks can be formally defined as Markov Decision Processes (MDPs), where an agent interacts with the environment and learns to make decision at every timestep. Formally, we consider the deterministic MDP with a fixed horizon $H\in\mathbb{N}^+$ denoted by a tuple $(\mathcal{S},\mathcal{A}, H,r,\gamma,\mathcal{T},\rho_0)$, where $\mathcal{S}$ and $\mathcal{A}$ are the $|\mathcal{S}|$-dimensional state and $|\mathcal{A}|$-dimensional action space; $r:\mathcal{S}\times\mathcal{A} \mapsto \mathbb{R}$ denotes the reward function; $\gamma \in (0,1]$ is the discount factor indicating importance of present returns compared with long-term returns;
$\mathcal{T}: \mathcal{S} \times \mathcal{A} \mapsto \mathcal{S}$ denotes the transition dynamics; $\rho_0$ is the initial state distribution.
 
We use $\Pi$ to represent the stationary deterministic policy class, i.e., $\Pi = \{\pi: \mathcal{S}\mapsto\mathcal{A}\}$.
The learning objective of an RL algorithm is to find $\pi^*\in\Pi$ as the solution of the following optimization problem: 
$\max_{\pi\in\Pi} \mathbb{E}_{\tau\sim\rho_0, \pi, \mathcal{T}}[\sum_{t=1}^H \gamma^t r_t]$
where the expectation is taken over the trajectory $\tau = (s_1, a_1, r_1, \dots, s_H, a_H, r_H)$ generated by policy $\pi$ under the environment $\mathcal{T}$, starting from $s_0\sim \rho_0$.

\paragraph{INVASE}
INVASE is proposed by~\cite{yoon2018invase} to perform instance-wise feature selection to reduce overfitting in predictive models. The learning objective is to minimize the KL-Divergence of the full-conditional distribution and the minimal-selected-features-only conditional distribution of the outcome, i.e., $\min_{F} \mathcal{L}$, with 
\begin{equation}
\label{eq_invase}
\begin{array}{ll}
    \mathcal{L} = \mathcal{D}_{KL}(p(Y|X = x )||p( Y |X^{(F(x))} = x^{(F(x)}))) + \lambda |F(x)|_0.
\end{array}
\end{equation}
where $F: \mathcal{X} \to \{0,1\}^d$ is a feature selection function and $|F(x)|_0$ denotes the cardinality ($l_0$ norm) of selected features, i.e., the number of $1$'s in $F(x)$.~\footnote{To avoid confusion between state notion $s\in\mathcal{S}$ and the selector notion $S$ used in~\cite{yoon2018invase}, $F$ is used in this work to represent the selector (i.e., mask generator).} $d$ is the dimension of input features. $x^{(F(x))} = F(x) \odot x$ denotes the element-wise product of $x$ and generated mask $m = F(x)$. Ideally, the optimal selection function $F$ should be able to minimize the two terms in Equation (\ref{eq_invase}) simultaneously.

INVASE applies the Actor-Critic framework in the optimization of $F$ through sampling, where $f_\theta(\cdot|x)$, parameterized by a neural network $\theta$~\footnote{In this work, subscripts $\phi, \psi, \theta, w$ are used to denote the parameter of neural networks.}, is used as a stochastic actor. Two predictive networks $C_{\phi}(\cdot), B_{\psi}(\cdot)$ are considered as the critic and the baseline network used for variance reduction~\cite{williams1992simple} and trained with the Cross-Entropy loss to produce return signal $\mathcal{L}$, based on which $f_\theta(\cdot|x)$ can be optimized through policy gradient:
\begin{equation}
    \mathbb{E}_{(x,y)\sim p}[\mathbb{E}_{m\sim f_\theta(\cdot|x)}[\mathcal{L} \nabla_{\theta}\log f_\theta (\cdot|x)] ].
\end{equation}
Finally, $F(x) = (F_1(x),...,F_d(x))$ can be get by sampling from $f(\cdot|x) = (f_1(x),...,f_d(x))$, with 
\begin{equation}
	F_i(x) = \begin{cases}
	1, \quad\mathbf{w.p.} \quad f_i(\cdot|x).\\
	0, \quad\mathbf{w.p.} \quad 1 - f_i(\cdot|x).
		   \end{cases}%
\end{equation}

\section{Proposed Method}

The objective of this work is to carry out state-wise action selection in RL through intervention, and thereby enhance the learning efficiency with a pruned task-related action space after finding the correct causal model. 
Section~\ref{method_motivation} starts with the formalization of the action space refinery objective in RL tasks under the framework of causal discovery. 
Section~\ref{method_icinvase} introduces SWAR, which improves the scalability of INVASE in high dimensional variable selection tasks. We integrate SWAR with deterministic policy gradient methods~\cite{silver2014deterministic} in Section~\ref{method_actionspacepruning} to perform state-wise action space pruning, resulting in two practical causality-aware RL algorithms.


\subsection{Temporal Difference Objective with Structural Causal Models}
\label{method_motivation}
In modern RL algorithms, the most general approach is based on the Actor-Critic framework~\cite{konda2000actor}, where the critic $Q_w(s,a)$ approximates the return of given state-action pair $(s,a)$ and guides the Actor to maximize the approximated return at state $s$. The Critic is optimized to reduce Temporal Difference (TD) error~\cite{sutton1998reinforcement}, defined as
\begin{equation}
    \mathcal{L}_{TD} = \mathbb{E}_{s_i,a_i,r_i, s'_i\sim\mathcal{B}} [(r_i + \gamma Q_w(s'_i, a'_i) -Q_w(s_i, a_i) )^2].
\end{equation}
where $\mathcal{B} = {(s_i, a_i, r_i, s'_i)}_{i=1,2,...}$ is the replay buffer used for off-policy learning~\cite{lillicrap2015continuous,fujimoto2018addressing,haarnoja2018soft}, and $a'_i = \pi(s'_i)$ is the predicted action for state $s'_i$. In practice, the calculations of $Q_w(s'_i,a'_i)$ are usually based on another set of slowly updated target networks for stability~\cite{fujimoto2018addressing,haarnoja2018soft}. Henceforth, TD-learning can be roughly simplified as regression with notion $y_i = r_i + \gamma Q_w(s'_i, a'_i)$: 
\begin{equation}
    \mathcal{L}_{TD} = \mathbb{E}_{s_i,a_i,r_i, s'_i\sim\mathcal{B}} [(y_i -Q_w(s_i, a_i) )^2].
\end{equation}
Assume there are only $M<L$ actions are related to a specific task among the $L$-dimensional actions $a_i = a_i^{(1)},...,a_i^{(L)}$, i.e., $Q_w(\cdot, \cdot)$ is function of $s_i, a_i^{(1)},...,a_i^{(M)}$. Learning with the primal redundant action space will lead to around $\frac{L+|\mathcal{S}|}{M+|\mathcal{S}|}$ times sample complexity~\cite{even2006action,zahavy2018learn}. 
Therefore, we are motivated to improve the learning efficiency of $Q$ by pruning those task-irrelevant action dimensions $a_i^{(M+1)},...,a_i^{(L)}$ by finding an action selection function $G$, satisfying
\begin{equation}
\label{eq_td_invase}
    \min_{G, Q_w} \mathbb{E}_{s_i,a_i,r_i, s'_i\sim\mathcal{B}} [(y'_i -Q_w(s_i, a_i^{(G(a_i|s_i))}) )^2]  + \lambda |G(a_i|s_i)|_0.
\end{equation}
where $y'_i =  r_i + \gamma Q_w(s'_i, a_i^{'G(a'_i|s_i)})$.

Such a problem can be addressed from the perspective of causal discovery. Formally, we can use the Structural Causal Models (SCMs) to represent the underlying causal structure of a sequential decision making process, as shown in Figure~\ref{fig:scms}. Under this language, we use the notion of \textbf{causal} actions to denote $a_i^{(1,...,M)}$, and \textbf{nuisance} actions for other dimension of actions. In our work, we use IC-INVASE for causal discovery. Ideally, the action selection function $G$ should be able to distinguish between nuisance action dimensions and the causal ones that has causal relation with either dynamics or reward mechanism. We present in the next section our causal discovery algorithms.

\begin{figure}[ht]
    \centering
    \includegraphics[width=0.8\linewidth]{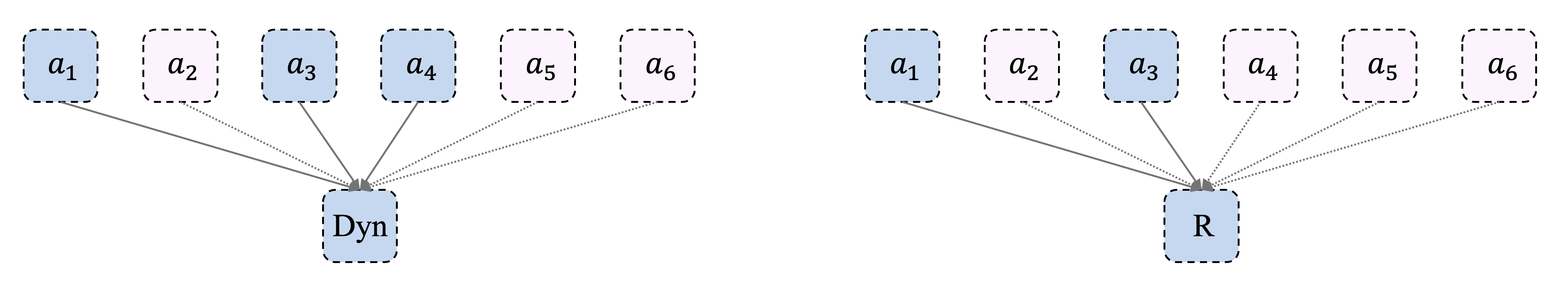}
    \caption{SCM of temporal difference learning. Among all executable actions, there can be only a subset have effect on the dynamical changes or the reward mechanism. In our work, we use IC-INVASE as a causal discovery tool to distinguish the causal irrelevant actions and hence improve learning efficiency.}
    \label{fig:scms}
\end{figure}



\subsection{Iterative Curriculum INVASE (IC-INVASE)}
\label{method_icinvase}
Instead of directly applying INVASE to solve Equation (\ref{eq_td_invase}). We first propose two improvements to make the vanilla INVASE more suitable for large-scale variable selection tasks as the action dimension in RL might be extremely large~\cite{vinyals2019grandmaster}. Specifically, the first improvement, based on curriculum learning, is introduced to tackle the exploration difficulty when $\lambda$ in Equation (\ref{eq_invase}) is large, where INVASE tends to converge to poor sub-optimal solutions and prune all variables including the useful ones~\cite{yoon2018invase}. The second improvement is based on the iterative structure of variable selection tasks: the feature selection operator $G$ can be applied multiple times to conduct hierarchical feature selection without introducing extra computation expenses.
\subsubsection{Curriculum Learning For High Dimensional Variable Selection}
The work of~\cite{bengio2009curriculum} first introduces Curriculum Learning to mimic human learning by gradually learn more complex concepts or handle more difficult tasks. Effectiveness of the method has been demonstrated in various set-ups~\cite{bengio2009curriculum,matiisen2019teacher,czarnecki2018mix,weinshall2018curriculum,xu2020curriculum}. In general, it should be easier to select $M$ useful variables out of $L$ input variables when $M$ is larger. The most trivial case is to select all $L$ variables, with an identical mapping $x^{(G(x))} = G(x)\odot x = x$. Formally, we have

\begin{proposition}[Curriculum Property in Variable Selection]
\label{prop_1}
Assume $M$ out of $L$ variables are outcome-related, let $M\le N_1<N_2 \le L$, $G_{N_1}(x)$ minimizes $\mathcal{D}_{KL}(p(Y|X = x )||p( Y |X^{(G(x))} = x^{(G(x))})) + \lambda ||G(x)|_0-N_1|$. Then  \\  $G_{N_2}(x)$ minimizes $\mathcal{D}_{KL}(p(Y|X = x )||p(Y|X^{G(x)} = x^{G(x)}))+ \lambda ||G(x)|_0-N_2|$ can be get through: \\
$G_{N_2}(x) \in \{G_{N_1}(x) \lor  [G_{N_1}(x) \mathbf{XOR} ~\mathbf{1}]_{1_{N_2-N_1}}\}$, \\where $[\cdot]_{1_{N_2-N_1}}$ means keep $N_2 - N_1$ none-zero elements unchanged while replacing other elements by $0$.
\end{proposition}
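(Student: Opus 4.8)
The plan is to prove the slightly stronger claim that, under $M \le N_1 < N_2 \le L$, \emph{every} selection function $G_{N_2}(x)$ of the prescribed form $G_{N_1}(x)\lor[G_{N_1}(x)\ \mathbf{XOR}\ \mathbf{1}]_{1_{N_2-N_1}}$ attains the global minimum of the $N_2$-target objective. First I would record the key observation that whenever the target $N$ satisfies $M\le N\le L$, the minimum over all selectors $G$ of $\mathcal{D}_{KL}\big(p(Y|X=x)\,\|\,p(Y|X^{(G(x))}=x^{(G(x))})\big)+\lambda\,\big|\,|G(x)|_0-N\,\big|$ equals $0$: the two summands are nonnegative, and both vanish at any $G$ that selects all $M$ outcome-related coordinates plus exactly $N-M$ of the remaining ones — the penalty is $0$ since $|G(x)|_0=N$, and the KL is $0$ since, by the meaning of ``outcome-related,'' the $M$ relevant coordinates already determine the conditional law $p(Y|X=x)$ and appending further coordinates does not change it.

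The second step is to read off the structure of an arbitrary minimizer $G_{N_1}$. Because $G_{N_1}$ attains the minimum value $0$ of the $N_1$-target objective and that value is a sum of two nonnegative terms, each term is individually $0$ at $G_{N_1}$. Vanishing of the $\ell_0$-penalty gives $|G_{N_1}(x)|_0 = N_1$; vanishing of the KL term forces $G_{N_1}(x)$ to be a \emph{sufficient} selection, i.e.\ to contain all $M$ outcome-related coordinates, since omitting a relevant coordinate produces a reduced conditional that differs from $p(Y|X=x)$ and hence strictly positive KL. Consequently $G_{N_1}(x)$ selects precisely the $M$ causal coordinates together with $N_1-M$ nuisance ones, leaving $L-N_1\ge N_2-N_1$ coordinates unselected.

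Third, I would verify that the prescribed enlargement leaves the minimum untouched. For $G_{N_2}(x)$ as in the statement, the term $G_{N_1}(x)\ \mathbf{XOR}\ \mathbf{1}$ is the bitwise complement of $G_{N_1}(x)$, listing the $L-N_1$ currently unselected coordinates; keeping $N_2-N_1$ of those (possible since $N_2-N_1\le L-N_1$) and OR-ing them back into $G_{N_1}(x)$ yields a mask with $|G_{N_2}(x)|_0 = N_1+(N_2-N_1)=N_2$, so the $\ell_0$-penalty of the $N_2$-target objective is $0$. Moreover $G_{N_2}(x)\ge G_{N_1}(x)$ coordinate-wise, so it still contains all $M$ causal coordinates; since a superset of a sufficient coordinate set is again sufficient, the KL term is also $0$. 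Hence the $N_2$-target objective equals $0$ at $G_{N_2}$, which by Step~1 is its global minimum, so $G_{N_2}$ is a minimizer, as claimed.

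The only points beyond bookkeeping are: (i) that a minimizer of the $N_1$-objective must have zero KL — this rests on $0$ being attainable, i.e.\ on $N_1\ge M$, which is exactly where the hypothesis is used; and (ii) the monotonicity fact that conditioning on a superset of a sufficient set is still sufficient. Both become routine once one fixes ``outcome-related'' to mean that $p(Y|X)$ depends on exactly these $M$ coordinates and nontrivially on each; I would state this convention explicitly, since without a minimality/nontriviality stipulation the assertion ``$G_{N_1}$ contains all $M$ relevant coordinates'' can fail (a relevant coordinate that is a deterministic function of another could be dropped). The handling of the $\mathbf{XOR}$, $\lor$, and $[\cdot]_{1_{N_2-N_1}}$ operators in Step~3 is purely mechanical.
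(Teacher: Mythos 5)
Your proof is correct and follows essentially the same route as the paper's: show the prescribed mask has exactly $N_2$ ones so the $\ell_0$ penalty vanishes, and note that since $G_{N_1}$ must already contain all $M$ outcome-related coordinates, the enlarged superset keeps the KL term at zero. Your version is simply more explicit — in particular about why a minimizer of the $N_1$-objective must achieve zero KL, and about the implicit nontriviality assumption on ``outcome-related'' that the paper's terse argument glosses over.
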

\begin{proof}
By the definition of the $[\cdot]_{1_{N_2-N_1}}$ operator, $||G(x)|_0-N_2|=0$ is minimized. On the other hand, starting from $N_1 = M$, minimizing $\mathcal{D}_{KL}(p(Y|X = x )||p(Y|X^{(G(x))} = x^{(G(x))}))$ requires all the $M$ outcome-related variables being selected by $G_{N_1}$. Therefore, $G_{N_2}$ also minimizes the KL-divergence by the independent assumption of the other $L-M$ variables with the outcomes.
\end{proof}
The proposition indicates the difficulty of selecting $N$ useful out of $L$ variables decreases monotonically as $N \ge M$ increase from $M,M+1, ...,L$. In this work, two classes of practical curriculum are designed: 1. curriculum on the $l_0$ penalty coefficient, and 2. curriculum on the proportion of variables to be selected.
\paragraph{Curriculum on $l_0$ Penalty Coefficient}
In this curriculum design, the penalty coefficient $\lambda$ in Equation (\ref{eq_invase}) is increased from $0$ to a pre-determined number (e.g., $1.0$). Increasing the value of $\lambda$ will lead to a larger penalty on the number of variables selected by the feature selector. Experiments in ~\cite{yoon2018invase} has shown a large $\lambda$ always lead to a trivial selector that does not select any variable.
\paragraph{Curriculum on the Proportion of Selected Features}
In this curriculum design, the proportion of variables to be selected, denoted by $p_r$, is adjusted from the default setting $0$ to a decreasing number from a pre-determined value (e.g., $0.5$) to $0$.
i.e., the $l_0$ penalty term $\lambda|G(x)|_0$ in Equation (\ref{eq_invase}) is revised to be $\lambda||G(x)|_0 - d\cdot p_r|$,
where $d$ is the dimension of input $x$. When the proportion is set to be $p_r = 0.5$, the selector will be penalized whenever less or more than half of all variables are selected. Such a curriculum design forces the feature selector to learn to select less but increasingly more important variables gradually.

Thus, we get the learning objective of curriculum-INVASE:
\begin{equation}
\label{eq_invase_cl}
    \mathcal{L}= \mathcal{D}_{KL}(p(Y|X = x )||p( Y |X^{(G(x))} = x^{(G(x)}))) + \lambda||G(x)|_0 - d\cdot p_r|.
\end{equation}
where $\lambda$ increases from $0$ to some value and $p_r$ decreases from a value in $[0,1)$ to $0$.

\subsubsection{Iterative Variable Selection}
The second improvement proposed in this work is based on the iterative structure of variable selection tasks. Specifically, the $G(x)$ mapping $x\in \mathcal{X}$ to $\{0,1\}^d$ is an iterative operator, which can be applied for multiple times to perform coarse-to-fine variable selection. Although in practice we follow ~\cite{yoon2018invase} to apply an element-wise product in producing $x^{(G(x))}$: $x^{(G(x))} = G(x)\odot x \in \mathcal{X}$. In more general cases, the i-th element of $x_i^{(G(x))}$ is
\begin{equation}
	x_i^{(G(x))} = \begin{cases}
	1, \quad\mathbf{if} \quad G_i(x)=1.\\
	*, \quad\mathbf{if} \quad G_i(x)=0.
		   \end{cases}%
\end{equation}
where $*$ can be an arbitrary identifiable indicator that represents the variable is not selected.
\begin{algorithm}[t]
\caption{TD3 with TD-SWAR}
\label{Algorithm1}
\begin{algorithmic}
		\STATE Initialize critic networks $C_{\phi_1}$, $C_{\phi_2}$, baseline networks $B_{\psi_1}$, $B_{\psi_2}$ and actor network $\pi_{\nu}$, IC-INVASE selector network $G_{\theta}$
		\STATE Initialize target networks $\phi'_1 \leftarrow {\phi_1}$, $\phi'_2 \leftarrow {\phi_2}$, $\psi'_1 \leftarrow {\psi_1}$, $\psi'_2 \leftarrow {\psi_2}$, $\nu' \leftarrow {\nu}$
		\STATE Initialize replay buffer $\mathcal{B}$
	    \FOR{$t = 1,H$}
	        \STATE Interact with environment and store transition tuple $(s,a,r,s')$ in $\mathcal{B}$
	        \STATE Sample mini-batch of transitions $\{(s,a,r,s')\}$ from $\mathcal{B}$
	        \STATE Calculate perturbed next action by $\tilde{a}\leftarrow \pi_{\nu'}(s') + \epsilon$, $\epsilon$ is sampled from a clipped Gaussian.
	        \STATE Select actions with target selector network \\ ~~$\tilde{a}^{(G(\tilde{a}|s'))} \leftarrow G_{\theta'}(\tilde{a}|s') \odot \tilde{a}$
	        \STATE Calculate target critic value $y_c$ and baseline value $y_b$:
	        \STATE ~~$y_c \leftarrow r + \gamma \min_{i=1,2}C_{\phi'_i}(s',\tilde{a}^{(G(\tilde{a}|s'))})$
	        \STATE ~~$y_b \leftarrow r + \gamma \min_{i=1,2}B_{\psi'_i}(s',\tilde{a})$
	        \STATE Update critics and baselines with selected actions: \\ 
	        ~~$a^{(G(a|s))} \leftarrow G_{\theta}(a|s') \odot a $ \\
	        ~~$\phi_i \leftarrow \arg\min_{\phi_i} \mathbf{MSE}(y_c,C_{\phi_i}(s,a^{(G(a|s))}))$ \\
	        ~~$\psi_i \leftarrow \arg\min_{\psi_i} \mathbf{MSE}(y_b,B_{\psi_i}(s,a))$
		    \STATE Update IC-INVASE selector network by the policy gradient, with learning rate $\eta_1$:\\
		    ~~$\theta \leftarrow \theta - \eta_1(l_b - l_c)\nabla_{\theta}\log G_{\theta}(a|s)$, $l_b$, $l_c$ are MSE \\~~ losses in the previous step.
		    \STATE Update $\nu$ by the deterministic policy gradient, with learning rate $\eta_2$:\\
		    ~~$\nu\leftarrow \nu - \eta_2 \nabla_a C_{\phi_1}(s,a)|_{a=\pi_\nu(s)}\nabla_\nu \pi_\nu (s)$
		    \STATE Update target networks, with $\tau \in (0,1)$: \\
		    ~~$\phi'_i \leftarrow \tau {\phi_i} + (1-\tau)\phi'_i$ \\
		    ~~$\psi'_i \leftarrow \tau {\psi_i} + (1-\tau)\psi'_i$ \\
		    ~~$\nu' \leftarrow \tau {\nu} + (1-\tau)\nu'$
		    
		\ENDFOR
\end{algorithmic}
\end{algorithm}

On the other hand, once the outputs $G(x)$ of the selector have been recorded, $*$ can be replaced by any label-independent variable $G(x)\odot z$, where $z\sim p_z(\cdot)$ is outcome-independent. Then $x^{(G(x))}$ can be regarded as a new sample and be fed into the variable selector, resulting in a hierarchical variable selection process:
\begin{equation}
\label{eq_hier}
\begin{array}{ll}
& x^{(1)} = (G(x) \odot x) \oplus (G(x) \odot z),\\
     & x^{(2)} = (G(x^{(1)}) \odot x^{(1)}) \oplus (G(x^{(1)}) \odot z),\\
     & ... \\
     & x^{(n)} = (G(x^{(n-1)}) \odot x^{(n-1)}) \oplus (G(x^{(n-1)}) \odot z),
\end{array}
\end{equation}
where $z\sim p_z(\cdot) $, and $\oplus$ is the element-wise sum operator. Moreover, if the distribution of irrelevant variable $p_{x}(\cdot)$ is known, applying the variable selection operator obtained from Equation (\ref{eq_invase_cl}) for multiple times with $p_{z}(\cdot) \buildrel d \over = p_{x}(\cdot)$ has the meaning of hierarchical variable selection: after each operation, the most obvious $1-p_r$ irrelevant variables are discarded. e.g., when $p_r = 0.5$, ideally top-$50\%, 25\%, 12.5\%$ most important variables will be selected after the first three selection operations. In this work, a coarse approximation is utilized by selecting $z$ to be $z = 0$ for simplicity. ~\footnote{$p_z(\cdot)$ may be learned through generative models to approximate $p_x(\cdot)$, and Equation (\ref{eq_hier}) can be regarded as a kind of data-augmentation or ensemble method. This idea is left for the future work.}

Combining those two improvements lead to an Iterative Curriculum version of INVASE (IC-INVASE) that addresses the exploration difficulty in high-dimensional variable selection tasks. Curriculum learning helps IC-INVASE to achieve better asymptotic performance, i.e., achieve higher True Positive Rate (TPR) and lower False Discovery Rate (FDR), while iterative application of the selection operator contributes to higher learning efficiency: selectors models with different level of TPR/FDR can be generated on-the-fly.
\subsection{State-Wise Action Refinery with IC-INVASE}
\label{method_actionspacepruning}
\subsubsection{Temporal Difference State-Wise Action Refinery}
With the techniques introduced in the previous section, higher dimensional variable selection tasks can be better solved, therefore we are ready to use IC-INVASE to solve Equation (\ref{eq_td_invase}). The resulting algorithm is called Temporal Difference State-Wise Action Refinery (TD-SWAR).

In this work, TD3~\cite{fujimoto2018addressing} is used as the basic algorithm we build TD-SWAR up on. In addition to the policy network $\pi_\nu$, double critic networks $C_{\phi_1}$, $C_{\phi_2}$ and their corresponding target networks used in vanilla TD3, TD-SWAR includes an action selector model $G_{\theta}$ and two baseline networks $B_{\psi_1}$, $B_{\psi_2}$ following ~\cite{yoon2018invase} to reduce the variance in policy gradient learning. Pseudo-code for the proposed algorithm is shown in Algorithm~\ref{Algorithm1}. And the block diagram in Figure~\ref{teaser} illustrates how different modules in TD-SWAR updates their parameters.

\subsubsection{Static Approximation: Model-Based Action Selection}
While IC-INVASE can be formally integrated with temporal difference learning, the learning stability is not guaranteed. Different from general regression tasks where the label for every instance is fixed across training, in temporal difference learning, the regression target is closely related to the present critic function $C_{\phi}$, the policy $\pi_\nu$ that generates the transition tuples used for training, and the selector model of IC-INVASE itself. In this section, a static approach is proposed to approximately solve the challenge of instability in TD-SWAR~\footnote{Analysis on the approximation is provided in Appendix~\ref{appd_apprx}}.

Other than applying the IC-INVASE algorithm to solve Equation (\ref{eq_td_invase}), another way of leveraging IC-INVASE in action space pruning is to combine it with the model-based methods~\cite{ha2018world,langlois2019benchmarking,hafner2019dream,janner2019trust}, where a dynamic model $\mathcal{P}:\mathcal{S}\times\mathcal{A}\mapsto \mathcal{S}$ is learned through regression:
\begin{equation}
\label{eq_mbrl}
    \mathcal{P} = \arg \min_{\mathcal{P}} \mathbb{E}_{(s,a,s')\sim \pi,\mathcal{T}}(s' - \mathcal{P}(s,a))^2
\end{equation}
Although the task of precise model-based prediction is in general challenging~\cite{sharma2019dynamics}, in this work, we only adopt model-based prediction in action selection, and the target is action discovery other than precise prediction. As the dynamic models are always static across learning, such an approach can be much more stable than TD-SWAR. We name this method as Dyn-SWAR and present the pseudo-code in Algorithm \ref{Algorithm2}, where we infuse IC-INVASE to Equation (\ref{eq_mbrl}) and get the learning objective:
\begin{equation}
    \min_{G,\mathcal{P}} \mathbb{E}_{(s,a,s')\sim \pi,\mathcal{T}}(s' - \mathcal{P}(s,a^{(G(a|s))}))^2
\end{equation}

\section{Experiment}

\begin{algorithm}[t]
\caption{TD3 with Dyn-SWAR}
\label{Algorithm2}
\begin{algorithmic}
		\STATE Initialize critic networks $Q_{w_1}$, $Q_{w_2}$, Dynamics critic model $C_{\phi}$, dynamic baseline model $B_{\psi}$, actor network $\pi_{\nu}$, and IC-INVASE selector network $G_{\theta}$
		\STATE Initialize target networks $w'_1 \leftarrow {w_1}$, $w'_2 \leftarrow {w_2}$, $\nu' \leftarrow {\nu}$
		\STATE Initialize replay buffer $\mathcal{B}$
	    \FOR{$t = 1,H$}
	        \STATE Interact with environment and store transition tuple $(s,a,r,s')$ in $\mathcal{B}$
	        \STATE Sample mini-batch of transitions $\{(s,a,r,s')\}$ from $\mathcal{B}$
	        \STATE Update dynamic critics and dynamic baselines with equation (\ref{eq_mbrl}): \\ 
	        ~~$\phi \leftarrow \arg\min_{\phi} \mathbf{MSE}(s',C_{\phi}(s,a^{(G(a|s))}))$ \\
	        ~~$\psi \leftarrow \arg\min_{\psi} \mathbf{MSE}(s',B_{\psi}(s,a))$
		    \STATE Update IC-INVASE selector network by the policy gradient, with learning rate $\eta_1$:\\
		    ~~$\theta \leftarrow \theta - \eta_1(l_b - l_c)\nabla_{\theta}\log G_{\theta}(a|s)$, $l_b$, $l_c$ are MSE \\~~ losses in the previous step.
	        \STATE Calculate perturbed next action by $\tilde{a}\leftarrow \pi_{\nu'}(s') + \epsilon$, $\epsilon$ is sampled from a clipped Gaussian.
	        \STATE Select actions with selector network \\ ~~$\tilde{a}^{(G(\tilde{a}|s'))} \leftarrow G_{\theta'}(\tilde{a}|s') \odot \tilde{a}$
	        \STATE Calculate target critic value $y$ and update critic networks:
	        \STATE ~~$y \leftarrow r + \gamma \min_{i=1,2}Q_{w'_i}(s',\tilde{a}^{(G(\tilde{a}|s'))})$
	        \STATE ~~$w_i \leftarrow \arg\min_{w_i} \mathbf{MSE}(y,Q_{w_i}(s,a^{(G(a|s))}))$ \\
		    \STATE Update $\nu$ by the deterministic policy gradient, with learning rate $\eta_2$:\\
		    ~~$\nu\leftarrow \nu - \eta_2 \nabla_a Q_{w_1}(s,a)|_{a=\pi_\nu(s)}\nabla_\nu \pi_\nu (s)$
		    \STATE Update target networks, with $\tau \in (0,1)$: \\
		    ~~$w'_i \leftarrow \tau {w_i} + (1-\tau)w'_i$ \\
		    ~~$\nu' \leftarrow \tau {\nu} + (1-\tau)\nu'$
		\ENDFOR
\end{algorithmic}
\end{algorithm}


In this section, we demonstrate our proposed methods in five continuous control RL tasks with redundant action space where our proposed methods can perform causality-aware RL. We provide quantitatively comparison between IC-INVASE and the vanilla INVASE on synthetic datasets to show its improved scalability in Appendix~\ref{appd_invase_exp}.

\begin{figure}[t]
\centering
\subfigure[4Rew.-Maze]{
\begin{minipage}[b]{0.15\linewidth}
\label{four_way_maze}
\includegraphics[width=1.0\linewidth]{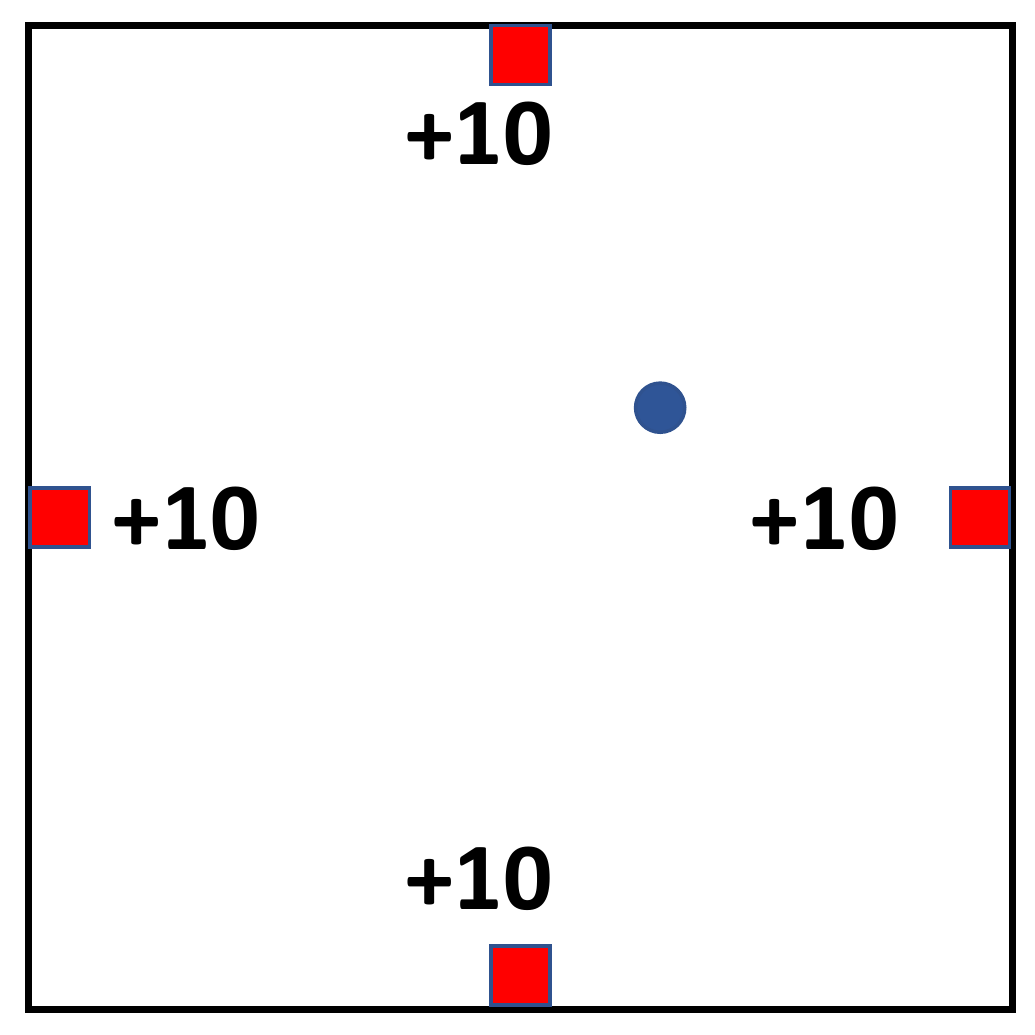}
\end{minipage}}%
\subfigure[Pendulum]{
\begin{minipage}[b]{0.2\linewidth}
\label{pendulum}
\includegraphics[width=1.0\linewidth]{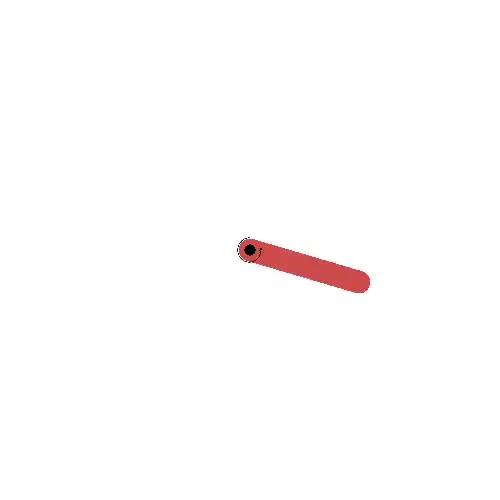}
\end{minipage}}%
\subfigure[Walker2d]{
\begin{minipage}[b]{0.15\linewidth}
\label{pendulum}
\includegraphics[width=1.0\linewidth]{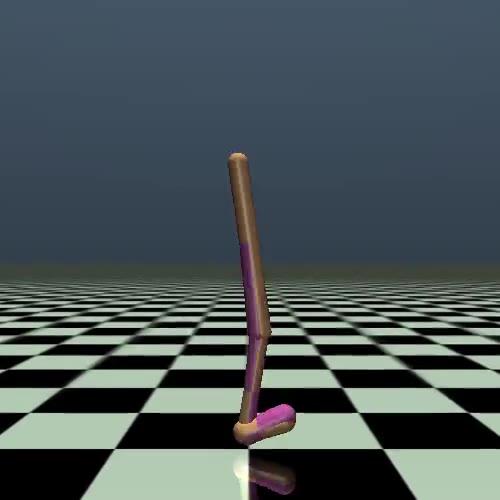}
\end{minipage}}%
\subfigure[LunarLander]{
\begin{minipage}[b]{0.24\linewidth}
\label{pendulum}
\includegraphics[width=1.0\linewidth]{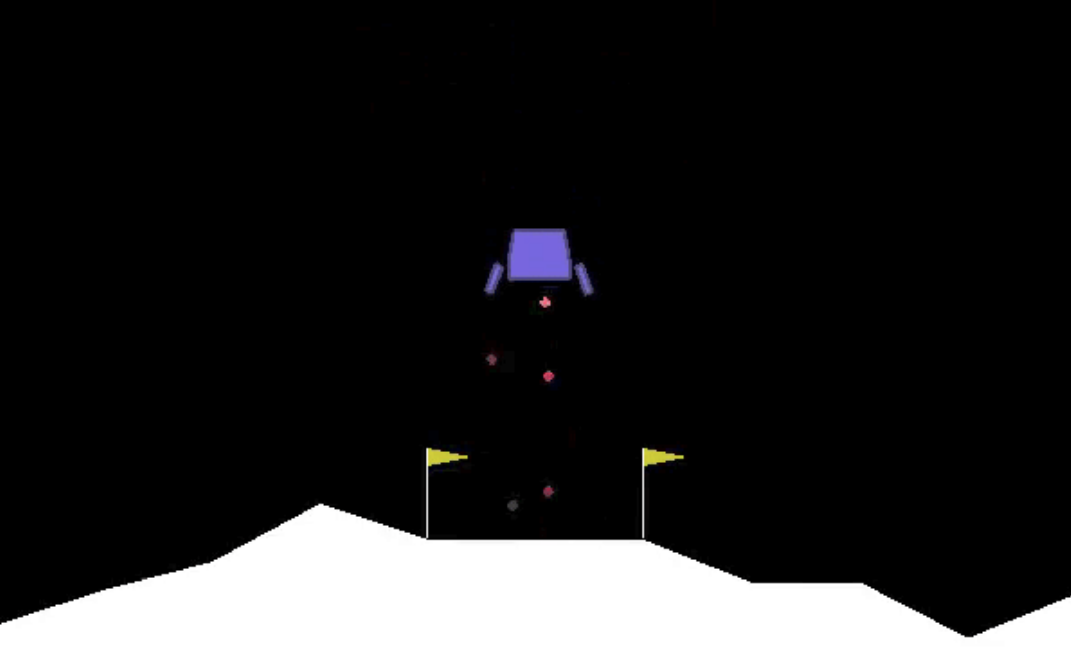}
\end{minipage}}%
\subfigure[BipedalWalker]{
\begin{minipage}[b]{0.22\linewidth}
\label{pendulum}
\includegraphics[width=1.0\linewidth]{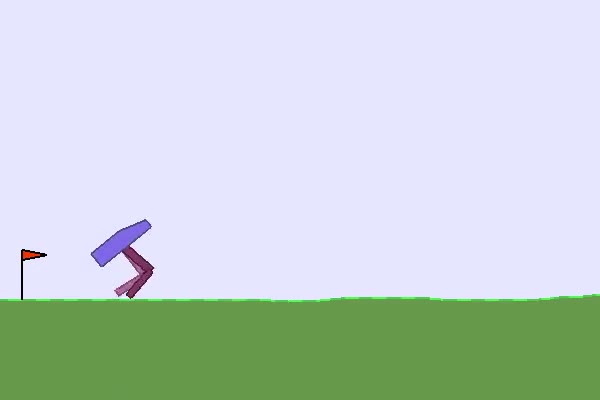}
\end{minipage}}\\
\caption{Environments used in experiments}
\label{fig_envs}
\vspace{-0.3cm}
\end{figure}

\begin{table}[t]
\caption{Tasks used in evaluating SWAR in temporal difference learning}
\label{table_tasks}
\vskip 0.15in
\begin{center}
\begin{small}
\begin{sc}
\begin{tabular}{llcc}
\toprule
Task/Dimension & $|\mathcal{S}|$ & $|\mathcal{A}|$ & $|\mathcal{A}_{red.}|$ \\
\midrule
Pendulum-v0 & 3 & 1 & 100 \\
FourRewardMaze & 2& 2 & 100\\
LunarLanderContinuous-v2 & 8 & 2 & 100\\
BipedalWalker-v3 & 24 & 4 & 100 \\
Walker2d-v2 & 17 & 6 & 100 \\
\bottomrule
\end{tabular}
\end{sc}
\end{small}
\end{center}
\vskip -0.1in
\end{table}

\begin{figure}[t]
\centering
\subfigure[FourRewardMaze]{
\begin{minipage}[b]{0.195\linewidth}
\label{four_way_maze}
\includegraphics[width=1.0\linewidth]{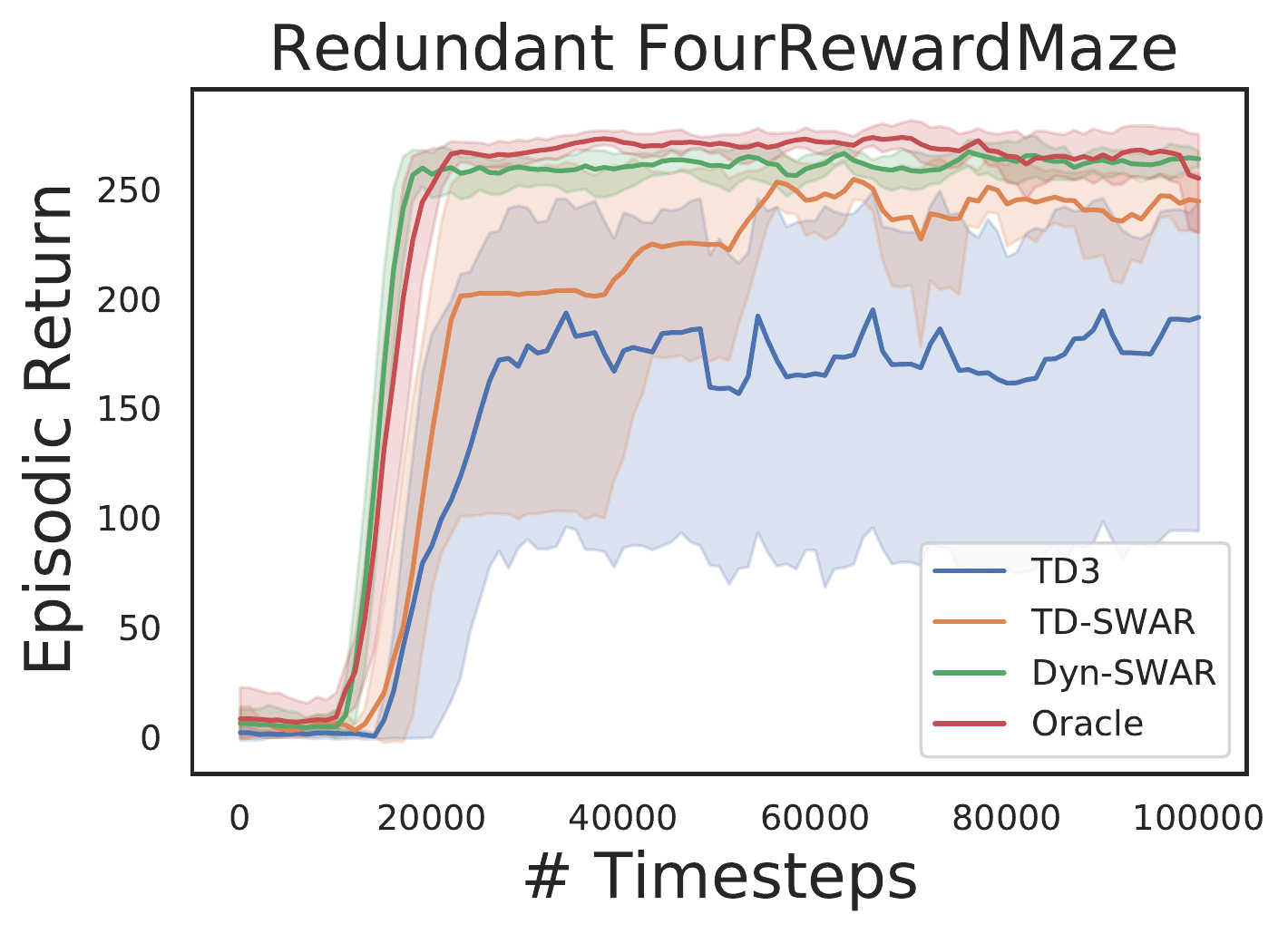}
\end{minipage}}%
\subfigure[Pendulum]{
\begin{minipage}[b]{0.2\linewidth}
\label{pendulum}
\includegraphics[width=1.0\linewidth]{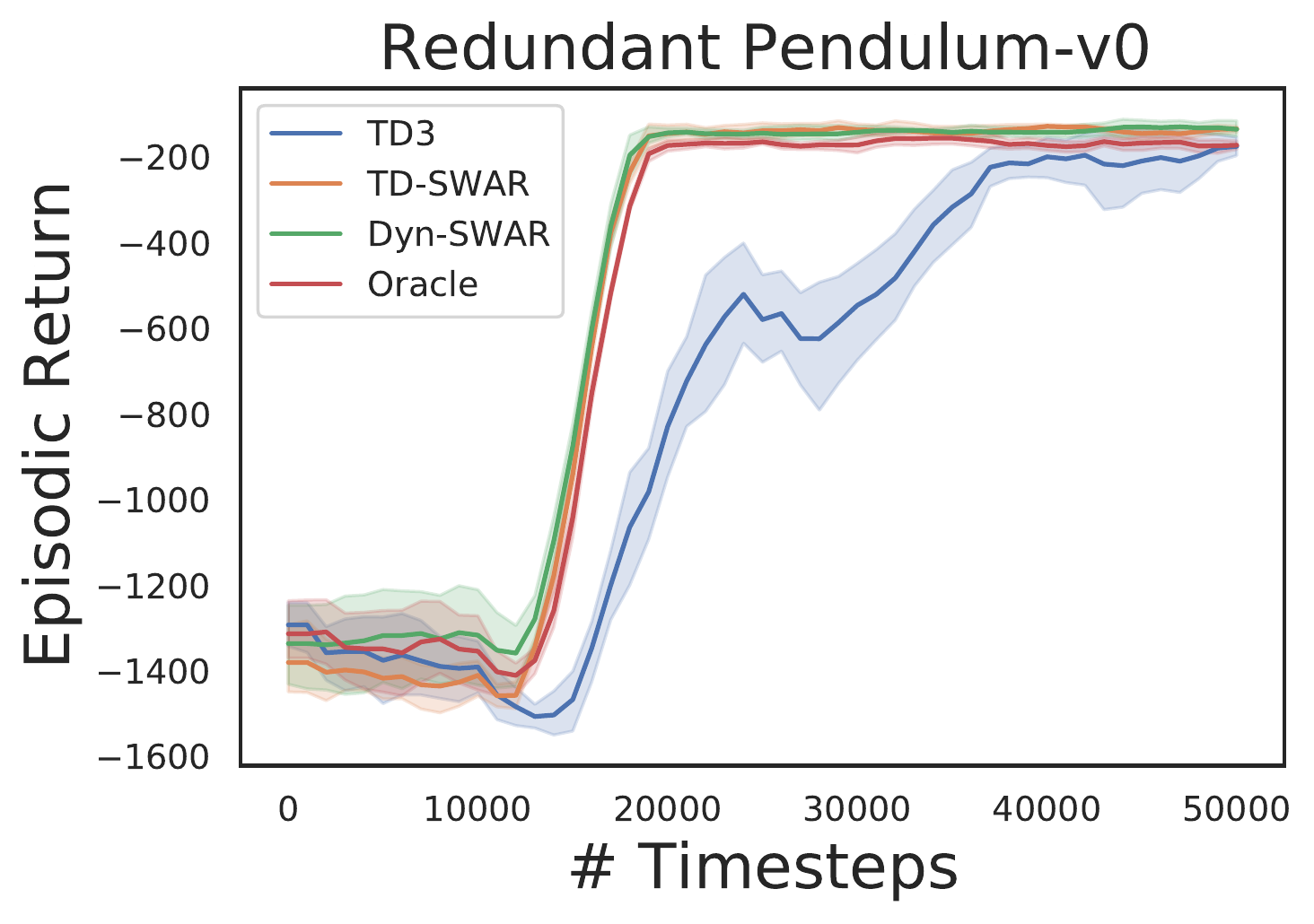}
\end{minipage}}%
\subfigure[Walker2d]{
\begin{minipage}[b]{0.195\linewidth}
\label{pendulum}
\includegraphics[width=1.0\linewidth]{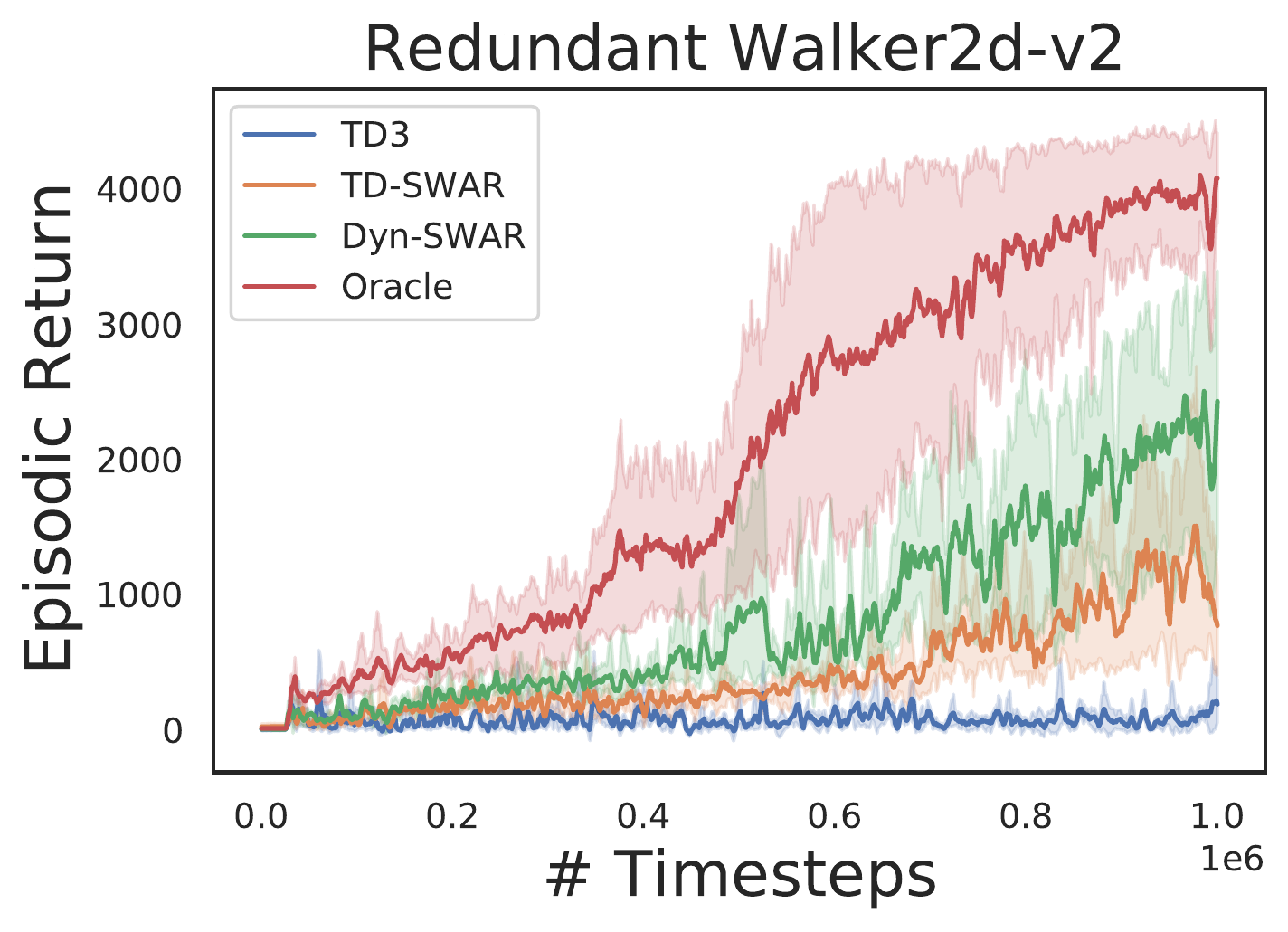}
\end{minipage}}%
\subfigure[LunarLander]{
\begin{minipage}[b]{0.21\linewidth}
\label{pendulum}
\includegraphics[width=1.0\linewidth]{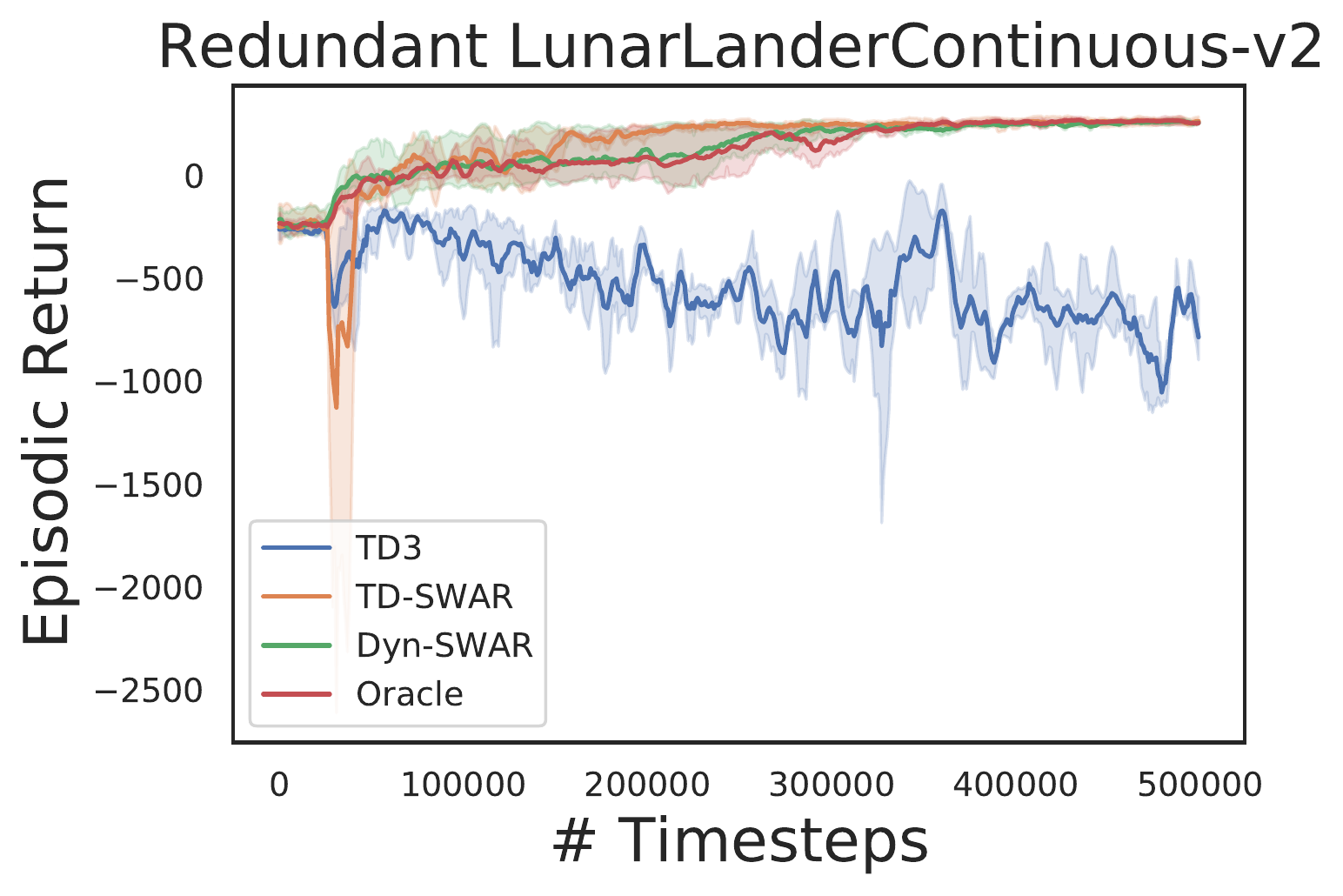}
\end{minipage}}%
\subfigure[BipedalWalker]{
\begin{minipage}[b]{0.195\linewidth}
\label{pendulum}
\includegraphics[width=1.0\linewidth]{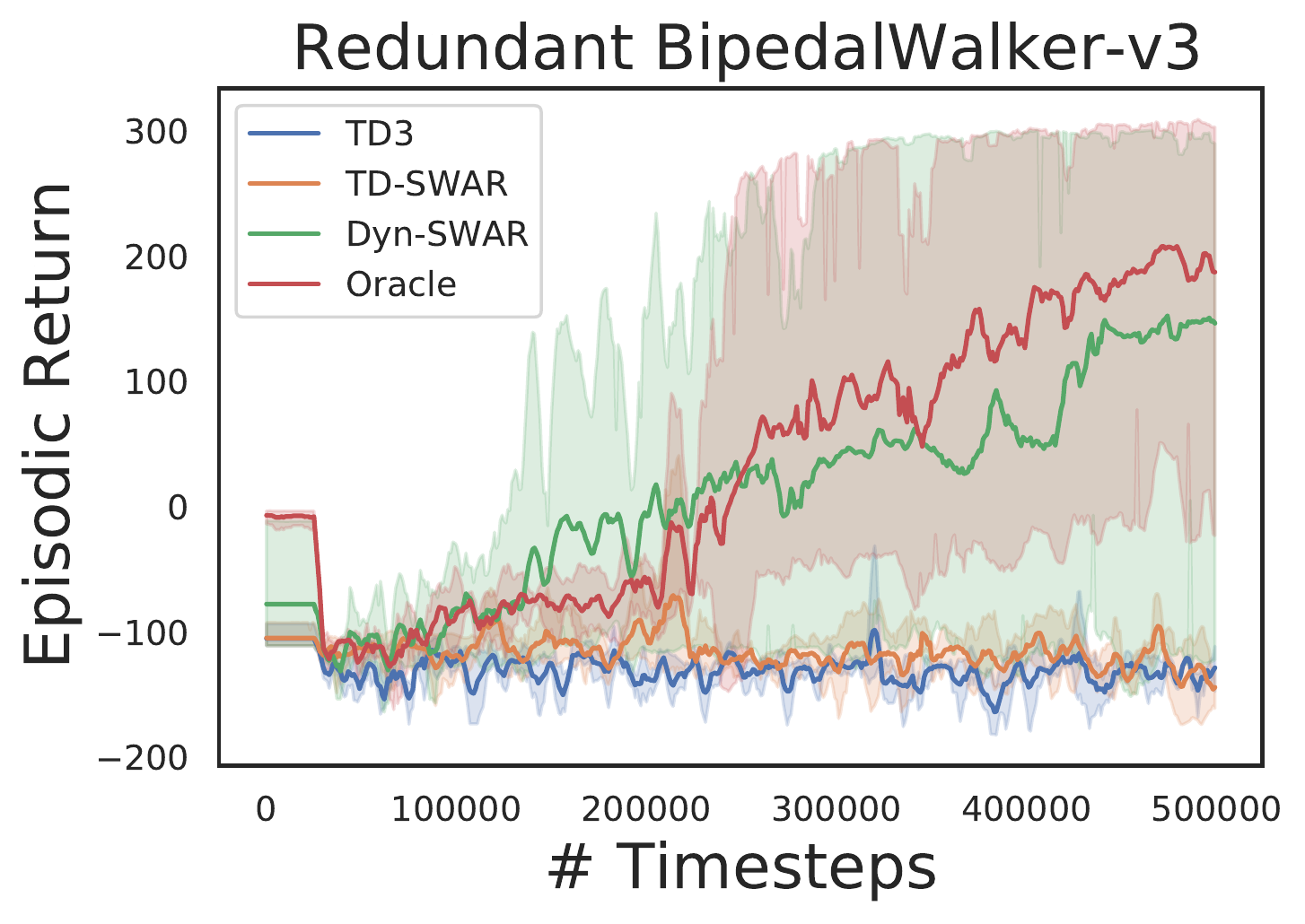}
\end{minipage}}
\caption{Performance of agents in five different environments. The curves shows averaged learning progress and the shaded areas show standard deviation.}
\label{fig_results}
\end{figure}

For this set of experiments, we use five RL environments (Figure~\ref{fig_envs}) that are listed in Table \ref{table_tasks}~\footnote{For more details of the environments please refer to Appendix~\ref{appd_env}}. $|\mathcal{S}|$ means the dimension of state space in each task, $|\mathcal{A}|$ represents the dimension of task-related action space, and $|\mathcal{A}_{red.}|$ indicates the dimension of redundant action space that is injected to each task. Those redundant dimensions of actions will not affect the state transitions or reward calculation, but an agent needs to learn to identify those redundant dimensions to perform efficient learning.

We evaluate both TD-SWAR that integrate IC-INVASE with temporal difference learning and its static variant Dyn-SWAR that applies IC-INVASE in dynamics prediction. The results are compared with two baselines: the \textbf{Oracle}: redundant action dimensions are eliminated manually; and \textbf{TD3}: the vanilla TD3 algorithm without explicit action redundancy reduction. 

In experiments, we find the implementation of Dyn-SWAR can be much more efficient in terms of both sample complexity and computational expense: while the TD-SWAR need to continuously update all parameters for the IC-INVASE selector to keep consistent with the real-time policy and value networks as the regression label varies along time, the Dyn-SWAR selector can be trained with far less amount of data. Say, $10,000$ to $25,000$ timesteps of interactions with the environment. Such a property can be naturally combined with the warm-up trick used in TD3~\cite{fujimoto2018addressing}, i.e., the Dyn-SWAR selector can be trained with warm-up transition tuples collected in the random exploration phase and then be fixed in the later learning process. Compared with normal RL settings where millions of interactions with the environment are always needed, the training of Dyn-SWAR only increases negligible computational expense.

The results are shown in Figure~\ref{fig_results}. In all environments, agent learning with IC-INVASE in both manner (TD-/Dyn-) outperforms the vanilla TD3 baseline. The Dyn-SWAR achieves high learning efficiency that is comparable to the oracle benchmarks. However, the performance of TD-SWAR in higher dimensional tasks (Walker2d-v2 and BipedalWalker-v3) still has a lot of room for improvement. Improving the stability of and scalability of instance-wise variable selection in temporal difference learning thereby should be addressed in future work.

\section{Related Work}
\paragraph{Instance-Wise Feature Selection}
While traditional feature selection method like LASSO~\cite{tibshirani1996regression} aims at finding globally important features across the whole dataset, instance-wise feature selection try to discover the feature-label dependency on a case-by-case basis. L2X~\cite{chen2018learning} performs instance-wise feature selection through mutual information maximization with the technique of Gumbel softmax. L2X requires pre-determined hyper-parameter $k$ to indicate how many features should be selected for each instance, which limits its performance while the number of label-relevant features varies across instances. 

In this work, we build our instance-wise action selection model on top of INVASE~\cite{yoon2018invase}, where policy gradient is applied to replace the Gumbel softmax trick and the size of chosen features per instance is more flexible. 
~\cite{tonekaboni2020went} considers instance-wise feature selection problems in time-series setting, and build generative models to capture counterfactual effects in time series data. Their work enables evaluation of the importance of features over time, which is crucial in the context of healthcare. ~\cite{masoomi2020instance} formally defines different types of feature redundancy and leverages mutual information maximization in instance-wise feature group discovery and introduces theoretical guidance to find the optimal number of different groups. 

Our work is distinguished from previous works for instance-wise feature selection in two aspects. First, while previous works focus on static scenarios like classification and regression, this work focus on temporal difference learning where there is no static label. Second, the scalability of previous methods in variable selection is challenged as there might exist hundreds of redundant actions in the context of RL.

\paragraph{Dimension Reduction in RL}

In the context of RL, attention models~\cite{vaswani2017attention} have been applied to interpret the behaviors of learned policies. ~\cite{tang2020neuroevolution} proposes to perceive the state information through a self-attention bottleneck in vision-based RL tasks, which concentrates on the state space redundancy reduction with image inputs. 
The work of ~\cite{mott2019towards} also applies the attention mechanism to learn task-relevant information. The proposed method achieves state-of-the-art performance on Atari games with image input while being more understandable with top-down attention models.

Different from those papers, this work considers relatively tight state representations (vector input), and focuses on the task-irrelevant action reduction. We aim at finding the task-related actions and improving the learning efficiency without wasting samples in learning the task-irrelevant dimensions of actions. Our work is most closely related to AE-DQN ~\cite{zahavy2018learn} in that we both consider the problem of redundant action elimination. AE-DQN tackles action space redundancy with an action-elimination network that eliminates sub-optimal actions. Yet its discussion is limited in the discrete settings. In contrast, our work focuses on action elimination in continuous control tasks.


\section{Conclusion and Future Work}
In this work, we tackle the challenge of action space pruning in action redundant RL tasks. Recent advance on instance-wise feature selection technique (INVASE) is exploited after curriculum learning and iterative operation are integrated for the pursuance of scalability and efficiency. The resulting method, termed IC-INVASE, is then generalized to the RL setting where two different algorithms are proposed, TD-SWAR and Dyn-SWAR, to conduct causality-aware RL. While the former algorithm addresses the action redundant issue directly in temporal difference learning, the latter algorithm captures dynamical causality with model-based prediction. Experiments on various tasks demonstrate the causality-awareness is crucial for RL agents to perform efficient learning in action-redundant environments.

In future work, the iterative property can be further explored to perform ensemble methods in variable selection. And a more proper curriculum might be designed to better fuse multiple curricula together. On the RL side, the stability of TD-SWAR might be further improved for better sample efficiency. 


\bibliography{ref}
\bibliographystyle{named}
\onecolumn
\newpage
\appendix

\section{On the Dynamic Model Approximation}
\label{appd_apprx}
We provide analysis on the approximation in this section based on the deterministic MDP model in finite action space where the problem degenerates to $Q$-Learning. Similar results can be get to prove the Policy Evaluation Lemma, combined with Policy Improvement Lemma (given proper function approximation of the $\arg\max$ operator) and result in Policy Iteration Theorem.

In deterministic MDPs with $s_{t+1} = \mathcal{T}(s_t,a_t)$, $r_t = r(s_t,a_t)$, the value function of a state is defined as 
\begin{equation}
    V^{\pi}(s) = \sum_{t=0}^{\infty} \gamma^t r(s_t,a_t),
\end{equation}
given $s_0 = s$ is the initial state and $a_t=\pi(s_t)$ comes from the deterministic policy $\pi$.

The learning objective is to find an optimal policy $\pi$, such that an optimal state value can be achieved:
\begin{equation}
    V^*(s) = \max_\pi V^{\pi}(s)
\end{equation}
The state-action value function ($Q$-function) is then defined as \begin{equation}
    Q(s,a) = r(s,a) + \gamma V^*(\mathcal{T}(s,a))
\end{equation}
Formally, the objective of action space pruning in action-redundant MDPs is to find an optimal policy $\pi^{(G)} = G(\pi(s_t)|s_t)\odot \pi(s_t)$ with an action selector $G:\mathcal{S}\times\mathcal{A}\mapsto \{0,1\}^d$, 
\begin{equation}
\label{eq_appd1}
    V^*(s) = \max_{\pi^{(G)}}V^{\pi^{(G)}}(s) =\max_\pi V^{\pi}(s),
\end{equation}
with minimal number of actions selected, i.e., $|G|_0$ is minimized. The sufficient and necessary condition for Equation~(\ref{eq_appd1}) to hold is $r(s_t,\pi(s_t)) = r(s_t, \pi^{(G)}{(s_t)})$ and $\mathcal{T}(s_t,\pi(s_t)) = \mathcal{T}(s_t, \pi^{(G)}(s_t))$.

In general, the reward function $r$ and transition dynamics $\mathcal{T}$ may depend on different subsets of actions and the optimal, i.e., $r(s_t,a_t) = r(s_t, a_t^{(G_1)})$, while $\mathcal{T}(s_t,a_t) = \mathcal{T}(s_t, a_t^{(G_2)})$, where $G_1$, $G_2$ select different subset of given actions by $a_t^{(G_1)} = G_1(a_t|s_t) \odot a_t$, $a_t^{(G_2)} = G_2(a_t|s_t) \odot a_t$ but $a_t^{(G_1)}\ne a_t^{(G_2)}$. The final action selector $G$ should be generated according to $G(a|s) = G_1(a|s) \lor G_2(a|s)$, where $\lor$ is the element-wise $\mathbf{OR}$ operation.

Therefore, in our approximation of Dyn-SWAR, we assume $G(a|s)= G_2(a|s)$ as an approximation for $G(a|s) = G_1(a|s) \lor G_2(a|s) $. Future work may include another predictive model for the reward function and take the element-wise $\mathbf{OR}$ operation to get $G$.

\section{Additional Experiments}
\label{appd_invase_exp}
\subsection{Synthetic Data Experiment}
The synthetic datasets are generated in the same way as~\cite{chen2018learning,yoon2018invase}. Specifically, there are $6$ synthetic datasets that have inputs generated from an $11$-dim Gaussian distribution without correlations across features. The label $Y$ for each dataset is generated by a Bernoulli random variable with $P(Y=1|X) = \frac{1}{1+\text{logit}(X)}$. In different tasks, $\text{logit}(X)$ takes the value of:
\begin{itemize}
    \item $\mathbf{Syn1}$: $\exp(X_1 X_2)$
    \item $\mathbf{Syn2}$: $\exp(\sum_{i=3}^{6} X_i^2 -4)$
    \item $\mathbf{Syn3}$: $-10 \times \sin{2X_7} + 2|X_8| + X_9 + \exp(-X_{10})$
    \item $\mathbf{Syn4}$: if $X_{11}<0$, logit follows $\mathbf{Syn1}$, otherwise, logit follows $\mathbf{Syn2}$
    \item $\mathbf{Syn5}$: if $X_{11}<0$, logit follows $\mathbf{Syn1}$, otherwise, logit follows $\mathbf{Syn3}$
    \item $\mathbf{Syn6}$: if $X_{11}<0$, logit follows $\mathbf{Syn2}$, otherwise, logit follows $\mathbf{Syn3}$
\end{itemize}
In the first three synthetic datasets, the label $Y$ depends on the same feature across each dataset, while in the last three datasets, the subsets of features that label $Y$ depends on are determined by the values of $X_{11}$.

For each dataset, $20,000$ samples are generated and be separated into a training set and a testing set. In this work, we focus on finding outcome-relevant features (e.g., finding task-relevant actions in the context of RL), thus the true positive rate (TPR) and false discovery rate (FDR) are used as performance metrics.

\paragraph{11-dim Feature Selection}
Table \ref{table_11features} shows the quantitative results of the proposed method, IC-INVASE on the 11-dim feature selection tasks. To accelerate training and facilitate the usage of dynamical computational graphs in curriculum learning and RL settings, the vanilla INVASE is re-implemented with PyTorch~\cite{paszke2017automatic}. In general, the PyTorch implementation is $4$ to $5$ times faster than the previous Keras~\cite{abadi2016tensorflow,chollet2015keras} implementation, with on-par performance on the $11$-dim feature selection tasks. In the comparison, both the reported results in ~\cite{yoon2018invase} (denoted by \textbf{INVASE (REP.)}) and our experimental results on INVASE (denoted by \textbf{INVASE (EXP.)}) are presented. The $p_r$ curriculum for IC-INVASE in all experiments are set to decrease from $0.5$ to $0.0$ except in ablation studies. Results of two different choices of the $\lambda$ curriculum are reported and denoted by \textbf{IC-INVASE ($\mathbf{\lambda \uparrow \cdot}$)}, e.g., $\lambda \uparrow 0.3$ means $\lambda$ increases from $0.0$ to $0.3$ in the experiment.
We omit the results on the first three datasets ($\mathbf{Syn1}$,$\mathbf{Syn2}$,$\mathbf{Syn3}$) where both IC-INVASE and INVASE achieve $100.0$ TPR and $0.0$ FDR. Iteration $1$ to Iteration $4$ in the table shows the results after applying the selection operator for different number of iterations.

In all experiments, IC-INVASE achieves better performance (i.e., larger TPR and lower FDR) than the vanilla INVASE with Keras and PyTorch implementation. Iterative applying the feature selection operator can reduce the FDR with a slight cost of TPR decay.

\paragraph{100-dim Feature Selection}
We then increase the total number of feature dimensions to $100$ to demonstrate how IC-INVASE improves the vanilla INVASE in larege-scale variable selection settings. In this experiment. The features are generated with $100$-dim Gaussian without correlations and the rules for label generation are still the same as the $11$-dim settings. (i.e., $89$ additional label-independent noisy dimensions of input is concatenated to the $11$-dim inputs.)

The results are shown in Table \ref{table_100features}. IC-INVASE achieves much better performance in all datasets, i.e., higher TPR and lower FDR. 
The ablation studies on different curriculum show both an increasing $\lambda$ and a decreasing $p_r$ can benefit discovery of label-dependent features. As the hyper-parameters for curriculum are not elaborated in our experiments, direct combining the two curriculum may hinder the performance. The design for curriculum fusion is left to the future work.

\begin{table*}[t]
\caption{Relevant variables discovery results for Synthetic datasets with 11-dim input}
\label{table_11features}
\vskip 0.15in
\small
\begin{center}
\scalebox{0.925}{
\begin{sc}
\begin{tabular}{llcccccccc}
\toprule
Data Set & Method &\multicolumn{2}{r}{Iteration $1$} & \multicolumn{2}{r}{Iteration $2$} & \multicolumn{2}{r}{Iteration $3$} & \multicolumn{2}{r}{Iteration $4$}\\
\cmidrule(r){1-1} \cmidrule(r){3-4}  \cmidrule(r){5-6} \cmidrule(r){7-8} \cmidrule(r){9-10}
Metric & & TPR & FDR &TPR & FDR &TPR & FDR &TPR & FDR \\
\midrule
\multirow{4}{*}{$\mathbf{Syn 4}$}    & INVASE (rep.) & 99.8 & 10.3 & & & & & & \\
                                    & INVASE (exp.)& 98.6 & 1.6 & 98.1& 1.1&98.1& 1.1 &98.1& 1.1 \\
                                    & IC-INVASE ($\lambda \uparrow 0.2$)& 99.7 & 3.4 & 99.7& 2.6& 99.7&2.5 &99.7 &2.5 \\
                                    & IC-INVASE ($\lambda \uparrow 0.3$)& 99.3 & 1.6 &\textbf{99.3} &\textbf{0.8} &\textbf{99.3} &\textbf{0.8} &\textbf{99.3} & \textbf{0.8}\\
                                    \hline
\multirow{4}{*}{$\mathbf{Syn 5}$}    & INVASE (rep.) & 84.8 & 1.1 & & & & & & \\
                                    & INVASE (exp.)& 82.1 & 1.0 & 79.7& 1.0& 79.3& 1.0 &79.2& 1.0 \\
                                    & IC-INVASE ($\lambda \uparrow 0.2$)& 99.3 & 1.6 & 99.1 & 1.1 & 99.1 &1.1 &99.1 &1.1 \\
                                    & IC-INVASE ($\lambda \uparrow 0.3$)& 96.8 & 1.0 & \textbf{96.4} & \textbf{0.4} &\textbf{96.4} &\textbf{0.4} &\textbf{96.4} &\textbf{0.4} \\
                                    \hline
\multirow{4}{*}{$\mathbf{Syn 6}$}    & INVASE (rep.) & 90.1 & 7.4 & & & & & & \\
                                    & INVASE (exp.)& 92.3 & 1.7 & 89.8& 1.6&89.6&1.6 &89.6& 1.6 \\
                                    & IC-INVASE ($\lambda \uparrow 0.2$)& 99.6 & 2.9 & 99.5 & 2.6& 99.5&2.5 & 99.5& 2.5\\
                                    & IC-INVASE ($\lambda \uparrow 0.3$)& 99.4 & 1.9 & \textbf{99.3} &\textbf{1.6} &\textbf{99.3} &\textbf{1.6} &\textbf{99.3} &\textbf{1.6} \\
\bottomrule
\end{tabular}
\end{sc}}
\end{center}
\vskip -0.1in
\end{table*}

\begin{table*}[t]
\caption{Relevant feature discovery results for Synthetic datasets with 100-dim input}
\label{table_100features}
\vskip 0.15in
\begin{center}
\scalebox{0.815}{
\begin{sc}
\begin{tabular}{llcccccccc}
\toprule
Data Set & Method &\multicolumn{2}{r}{Iteration $1$} & \multicolumn{2}{r}{Iteration $2$} & \multicolumn{2}{r}{Iteration $3$} & \multicolumn{2}{r}{Iteration $4$}\\
\cmidrule(r){1-1} \cmidrule(r){3-4}  \cmidrule(r){5-6} \cmidrule(r){7-8} \cmidrule(r){9-10}
Metric & & TPR & FDR &TPR & FDR &TPR & FDR &TPR & FDR \\
\midrule
\multirow{5}{*}{$\mathbf{Syn 4}$}    & INVASE (rep.) & 66.3 & 40.5 & & & & & & \\
                                    & INVASE (exp.)& 27.0 & 6.5 & 18.0& 6.4&18.0& 6.4 &18.0& 6.4 \\
                                    & IC-INVASE W/O $p_r\downarrow$ & 66.3 & 40.5 &66.3 &40.5 &66.3  &40.5 &66.3 & 40.5\\
                                    & IC-INVASE W/O $\lambda \uparrow$ & 100.0 & 43.0 & 100.0& 43.0& 100.0&43.0 &100.0 &43.0 \\
                                     & IC-INVASE&\textbf{ 100.0 }& 43.0 & \textbf{100.0}& 43.0& \textbf{100.0}&43.0 &\textbf{100.0} &43.0 \\
                                    \hline
\multirow{5}{*}{$\mathbf{Syn 5}$}    & INVASE (rep.) & 73.2 & 23.7 & & & & & & \\
                                    & INVASE (exp.)& 56.4 & 37.9 & 56.4& 37.9&56.4& 37.9 &56.4& 37.9 \\
                                    & IC-INVASE W/O $p_r\downarrow$ & 90.9 & 7.8 & 88.8 &4.4 &88.8  &4.3 &88.8 & 4.3\\
                                    & IC-INVASE W/O $\lambda \uparrow$ & 96.1 & 11.3 & 95.2& 8.2& \textbf{95.5}&8.1 &\textbf{95.5} &8.1 \\
                                    & IC-INVASE& 91.9 & 8.1 & 90.8& 4.3& \textbf{90.8}&\textbf{4.2} &\textbf{90.8} &\textbf{4.2} \\
                                    \hline
\multirow{5}{*}{$\mathbf{Syn 6}$}    & INVASE (rep.) & 90.5 & 15.4 & & & & & & \\
                                    & INVASE (exp.)& 90.1 & 43.7 & 90.1 & 43.7&90.1 & 43.7 &90.1 & 43.7\\
                                    & IC-INVASE W/O $p_r\downarrow$ & 98.5 & 4.1 &98.4 &2.4 &98.4  &\textbf{2.3} &98.4 & \textbf{2.3}\\
                                    & IC-INVASE W/O $\lambda \uparrow$ & 99.6 & 8.1 & 99.6& 7.1& \textbf{99.6}&7.0 &\textbf{99.6} &7.0 \\
                                    & IC-INVASE& 98.9 & 7.0 & 98.9& 5.0& \textbf{98.9}&\textbf{4.9} &\textbf{98.9} &\textbf{4.9} \\
\bottomrule
\end{tabular}
\end{sc}}
\end{center}
\vskip -0.1in
\end{table*}

\section{Environment Details}
\label{appd_env}

\begin{figure}[h]
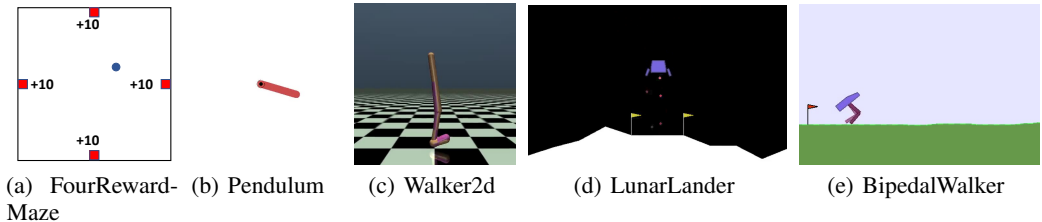

\centering
\subfigure[FourRewardMaze]{
\begin{minipage}[b]{0.153\linewidth}
\label{four_way_maze}
\includegraphics[width=1.0\linewidth]{fig/FourWayMaze.pdf}
\end{minipage}}%
\subfigure[Pendulum]{
\begin{minipage}[b]{0.153\linewidth}
\label{pendulum}
\includegraphics[width=1.0\linewidth]{fig/pendulum1.jpg}
\end{minipage}}
\subfigure[Walker2d]{
\begin{minipage}[b]{0.153\linewidth}
\label{pendulum}
\includegraphics[width=1.0\linewidth]{fig/walker.jpg}
\end{minipage}}
\subfigure[LunarLander]{
\begin{minipage}[b]{0.245\linewidth}
\label{pendulum}
\includegraphics[width=1.0\linewidth]{fig/lunar.png}
\end{minipage}}
\subfigure[BipedalWalker]{
\begin{minipage}[b]{0.228\linewidth}
\label{pendulum}
\includegraphics[width=1.0\linewidth]{fig/bipedal.jpg}
\end{minipage}}\\
\caption{Environments used in experiments}
\label{fig_envs}
\vspace{-0.3cm}
\end{figure}

\paragraph{FourRewardMaze} The FourRewardMaze is a 2-D navigation task where an agent need to find all four solutions to achieve better performance. The state space is $2$-D continuous vector indicating the position of the agent, while the action space is a $2$-D continuous value indicating the direction and step length of the agent, which is limited to $[-1,1]$. The initial location of the agent is randomly selected for each game, and each episode has the length of $32$, which is the timesteps needed to collect all four rewards from any starting position. 

\paragraph{Pendulum-v0}
The Pendulum-v0 environment is a classic problem in the control literature. In the Pendulum-v0 of OpenAI Gym. The task has $3$-D state space and $1$-D action space. In every episode the pendulum starts in a random position, and the learning objective is to swing the pendulum up and keep it staying upright.

\paragraph{Walker2d-v2}
The Walker2d-v2 environment is a locomotion task where the learning objective is to make a two-dimensional bipedal robot walk forward as fast as possible. The task has $17$-D state space and $6$-D action space.

\paragraph{LunarLanderContinuous-v2}
In the tasks of LunarLanderContinuous-v2, the agent is asked to control a lander to move from the top of the screen to a landing pad located at coordinate $(0,0)$. The fuel is infinite, so an agent can learn to fly and then land on its first attempt. The state is as $8$-D real-valued vector and action is $2$-D vector in the range of $[-1,1]$, where the first dimension controls main engine, $[-1,0]$ off, $[0.,1]$ throttle from $50\%$ to $100\%$ power and the second value in $[-1,-0.5]$ will fire left engine, while a value in $[0.5,1.0]$ fires right engine, otherwise the engine is off.

\paragraph{BipedalWalker-v3}
The BipedalWalker-v3 is a locomotion task where the state space is $24$-D and the action space is $4$-D. The agent needs to walk as far as possible in each episode where a total timestep of $1000$ are given and total $300$ points might be collected up to the far end. If the robot falls, it gets $-100$ points. Applying motor torque costs a small amount of points, more optimal agent will get better score. 
\section{Reproduction Checklist}
\subsection{Neural Network Structure}
In all experiments, we use the same neural network structure: in TD3, we follow the vanilla implementation to use $3$-layer fully connected neural networks where $256$ hidden units are used. In the selector networks of the INVASE module, we follow the vanilla implementation to use $3$-layer fully connected neural networks where $100$, $200$ hidden units are used.
\subsection{Hyper-Parameters}
In both TD-SWAR and the Dyn-SWAR, we apply IC-INVASE with $p_r$ reducing from $0.5$ to $0.0$ and $\lambda$ increasing from $0.0$ to $0.2$. While our experiments have already shown the effectiveness and robustness of those hyper-parameters, performing grid search on those hyper-parameters may lead to further performance improvement.

\end{document}